\documentclass{article}

\usepackage{microtype}
\usepackage{graphicx}
\usepackage{subcaption}
\usepackage{booktabs}
\usepackage{hyperref}

\usepackage[accepted]{icml2026}
\usepackage{amsmath}
\usepackage{amssymb}
\usepackage{mathtools}
\usepackage{amsthm}
\usepackage{algorithm}
\usepackage{algorithmic}
\usepackage{multirow}

\theoremstyle{plain}
\newtheorem{theorem}{Theorem}[section]
\newtheorem{proposition}[theorem]{Proposition}

\theoremstyle{definition}

\theoremstyle{remark}

\icmltitlerunning{UniJEPA: Unified Joint-Embedding Predictive Architecture}

\begin{document}

\twocolumn[
  \icmltitle{UniJEPA: A Unified Joint-Embedding Predictive Architecture \\ for Task-Agnostic Visual World Modeling}

  \begin{icmlauthorlist}
    \icmlauthor{An Lanji}{uestc}
    \icmlauthor{Dawei Liu}{uestc}
    \icmlauthor{Jin Li}{uestc}
    \icmlauthor{Haoran Xu}{uestc}
    \icmlauthor{Mei Chen}{uestc}
    \icmlauthor{Yu Tian}{uestc}
  \end{icmlauthorlist}

  \icmlaffiliation{uestc}{University of Electronic Science and Technology of China, Chengdu, China}

  \icmlcorrespondingauthor{An Lanji}{lanji.an@uestc.edu.cn}

  \icmlkeywords{Joint-Embedding Predictive Architecture, world model, self-supervised learning, representation learning, planning}

  \vskip 0.3in
]

\printAffiliationsAndNotice{}

\begin{abstract}
Joint-Embedding Predictive Architectures (JEPAs) have emerged as a principled framework for self-supervised learning of world models in compact latent spaces, yet existing methods are fragmented: some predict masked parts of a single image in latent space (I-JEPA), others learn to predict global photometric transformations (Image World Models), while video-scale JEPAs predict future temporal states and are post-trained for action-conditioned planning (V-JEPA~2, DINO-World, DINO-WM). These objectives are treated as distinct recipes with separate encoders, predictors, and anti-collapse regularizers, hindering a single model from unifying image-level and video-level world modeling. We present \textbf{UniJEPA}, a unified JEPA that jointly learns \emph{photometric} prediction (image-level transformations) and \emph{temporal} prediction (video-level next-state dynamics) in one shared latent space. A single end-to-end objective, composed of a next-embedding prediction loss and a Gaussian regularizer, yields a provably anti-collapse encoder-predictor pair trainable from raw pixels without EMA, stop-gradient, or pre-trained encoders. We show that the same latent space supports controllable abstraction: photometric prediction learns invariant structure while temporal prediction learns equivariant dynamics. After action-conditioned post-training on offline trajectories, UniJEPA enables zero-shot planning by treating goal features as prediction targets. On image, video, and control benchmarks, UniJEPA matches or surpasses task-specific JEPAs while requiring a single loss hyperparameter, and plans up to tens of times faster than generative world models at comparable accuracy.
\end{abstract}

\section{Introduction}

A central goal of artificial intelligence is to build agents that learn an internal model of the world by observation and use it to understand, predict, and plan in novel situations \citep{lecun2022path}. World models formalize this by learning to predict the consequence of an action or the evolution of a scene, either in input space \citep{ha2018world,brooks2024video} or in latent space \citep{hafner2019dreamer,hafner2023mastering,assran2023ijepa}. Among latent approaches, the Joint-Embedding Predictive Architecture (JEPA) \citep{lecun2022path,assran2023ijepa} has become a promising self-supervised framework: an encoder maps observations to a compact latent space, and a predictor models dynamics by forecasting the latent representation of future or masked content, without reconstructing raw pixels. Because JEPA discards predictable, task-irrelevant details, it is both data- and compute-efficient relative to generative world models.

Yet the JEPA family is fragmented. Image-level JEPAs such as I-JEPA \citep{assran2023ijepa} predict masked regions of a single image; Image World Models (IWM) \citep{garrido2024iwm} generalize this to global photometric transformations, learning world models that can ``undo'' brightness, contrast, and hue corruptions in latent space. Video-scale JEPAs such as V-JEPA~2 \citep{assran2025vjepa2} and DINO-World \citep{baldassarre2025dinoworld} predict future temporal states from large-scale video, and can be post-trained into action-conditioned models (V-JEPA~2-AC, DINO-WM \citep{zhou2024dinowm}) for zero-shot planning. Meanwhile, LeWorldModel \citep{maes2026lewm} shows that a JEPA can be trained stably end-to-end from pixels with a single Gaussian regularizer, avoiding collapse without heuristics.

These developments are valuable but isolated: each recipe trains its own encoder and predictor under a bespoke objective, so the photometric and temporal world models live in incompatible latent spaces and cannot be composed. We argue that image-level and video-level prediction are two views of the \emph{same} world model, and that a single latent space should support both. This unification matters practically: a world model that can both explain how a scene changes under global transformations and predict how it evolves over time is more transferable, more sample-efficient, and can serve a wider range of downstream tasks---from robust representation learning to embodied planning.

We present \textbf{UniJEPA}, a unified JEPA that jointly optimizes photometric and temporal prediction in a shared latent space with a single collapse-free objective. Our contributions are:

\begin{itemize}
  \item \textbf{Unified objective.} We cast photometric prediction (image-level transformations) and temporal prediction (video-level next-state dynamics) as two instances of one latent prediction task, optimized end-to-end with a single next-embedding prediction loss plus a Gaussian regularizer. Only one loss hyperparameter is needed, in the spirit of LeWorldModel \citep{maes2026lewm}.
  \item \textbf{Provable anti-collapse.} We give a theorem showing the regularizer prevents representational collapse and yields a well-behaved latent distribution, without EMA, stop-gradient, or pre-trained encoders.
  \item \textbf{Controllable abstraction.} We show photometric prediction learns invariant structure (as in contrastive methods \citep{chen2020simclr,grill2020byol}) while temporal prediction learns equivariant dynamics (as in masked image modeling \citep{he2022masked}), and that interpolating between the two objectives controls the abstraction level of the representation.
  \item \textbf{Zero-shot planning.} After action-conditioned post-training on offline trajectories, UniJEPA enables model-predictive-control planning that reaches visual goals without expert demonstrations or reward signals, consistent with DINO-WM \citep{zhou2024dinowm} but from a unified, end-to-end encoder.
\end{itemize}

Figure~\ref{fig:motivation} sketches the motivation: existing JEPA variants occupy separate latent spaces specialized for image-level, video-level, or action-conditioned prediction, whereas UniJEPA funnels all three into a single shared latent space. We evaluate UniJEPA on image (ImageNet), video (Something-Something-v2 \citep{goyal2017something}, Epic-Kitchens \citep{damen2018epic}), and control benchmarks. UniJEPA matches or surpasses task-specific JEPAs and generative world models while training end-to-end with a single hyperparameter, and plans up to tens of times faster than pixel-based generative world models \citep{brooks2024video,blattmann2023stablevideo} at comparable or better accuracy.

\begin{figure}[t]
  \centering
  \includegraphics[width=\columnwidth]{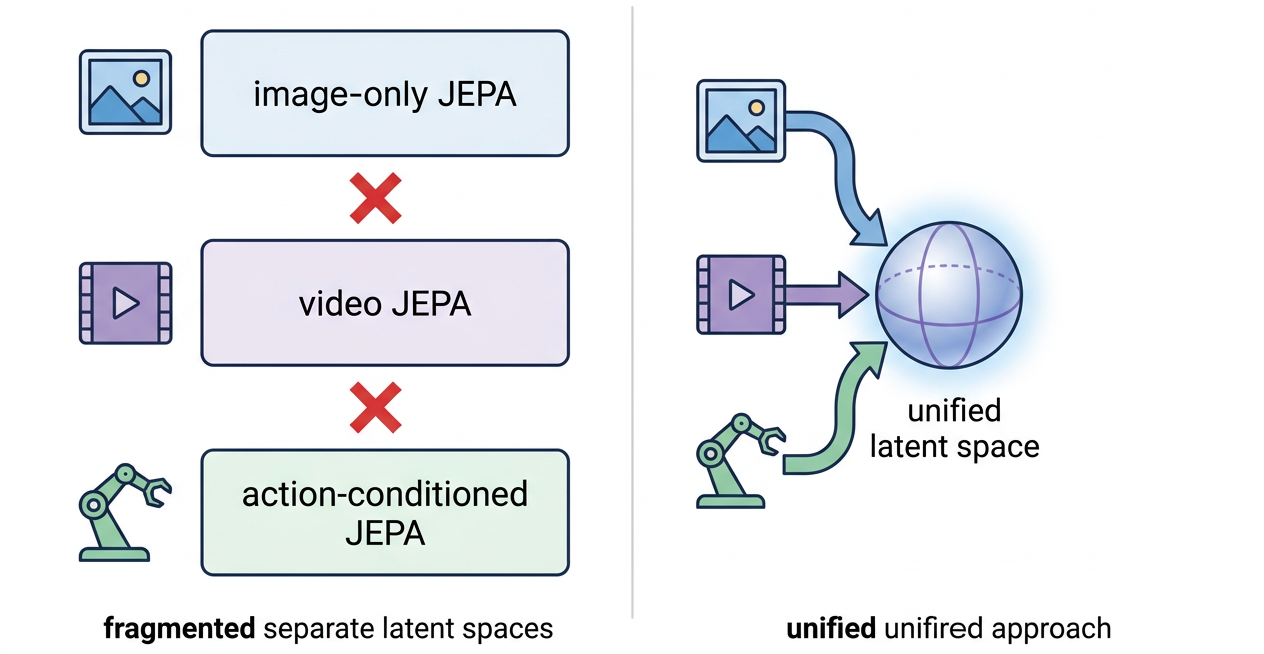}
  \caption{\textbf{Motivation.} Left: prior JEPA variants are fragmented across separate latent spaces for image, video, and action-conditioned prediction. Right: UniJEPA unifies all prediction tasks in a single shared latent space with one encoder and predictor.}
  \label{fig:motivation}
\end{figure}

\section{Related Work}

\paragraph{Joint-Embedding Predictive Architectures.}
JEPA was proposed as a path toward autonomous machine intelligence \citep{lecun2022path}. I-JEPA \citep{assran2023ijepa} learns to predict missing blocks of an image in latent space, and V-JEPA \citep{bardes2024vjepa} extends this to video with temporal masking. The core design choice is that prediction happens in a learned, abstract latent space rather than raw pixels. This contrasts with contrastive methods \citep{chen2020simclr,grill2020byol,caron2021dino} and masked image modeling \citep{he2022masked,bao2021beit}, which differ in whether they use negative pairs or reconstruction. data2vec \citep{baevski2022data2vec} generalizes masked prediction across modalities. Our work builds on this line by \emph{unifying} image- and video-level JEPA objectives in a single model.

\paragraph{Image World Models.}
IWM \citep{garrido2024iwm} generalizes JEPA prediction from masking to global photometric transformations, treating the predictor as a latent image world model that predicts the effect of an ``action'' (e.g., brightness change) on the representation. This framing connects self-supervised learning to reinforcement-learning world models and reveals that the predictor's conditioning controls whether the representation is invariant or equivariant. Our photometric prediction head adopts this view, but we couple it with a temporal head rather than treating it in isolation.

\paragraph{Video world models and planning.}
World models trained from video enable prediction and control \citep{ha2018world,hafner2019dreamer,hafner2023mastering,hansen2022tdmpc}. Recent large-scale generative video models can serve as world simulators \citep{brooks2024video,blattmann2023stablevideo,yang2024world}, but their pixel-space predictions are computationally expensive and encode task-irrelevant detail, and denoising-diffusion generators \citep{ho2020denoising} are especially costly for online planning. Latent video JEPAs such as V-JEPA~2 \citep{assran2025vjepa2} avoid this by predicting in latent space, and DINO-World \citep{baldassarre2025dinoworld} / DINO-WM \citep{zhou2024dinowm} exploit a frozen DINOv2 encoder \citep{oquab2024dinov2} so that the predictor can be trained separately and post-tuned on actions for zero-shot planning. Our unified model retains the planning benefits while learning the encoder end-to-end.

\paragraph{Anti-collapse regularization.}
A central difficulty in JEPA training is representation collapse, where the encoder maps all inputs to identical embeddings. Contrastive methods use negatives \citep{chen2020simclr}; BYOL uses a momentum target \citep{grill2020byol}; equivariant SSL uses orthogonalization \citep{garrido2023equivariant}. LeWorldModel \citep{maes2026lewm} shows that a Gaussian-distributed latent regularizer suffices to prevent collapse end-to-end. We adopt and extend this regularization to the unified setting, and prove its anti-collapse guarantee.

\paragraph{Backbone architectures.}
Our encoder is a Vision Transformer \citep{dosovitskiy2021vit,vaswani2017attention}, which provides the spatial patch tokens used for both photometric and temporal prediction. Hierarchical ViTs \citep{liu2021swin} offer an alternative with locality bias, and we find the standard ViT sufficient for our unified objective. For dense-prediction transfer we rely on standard decoder heads \citep{badrinarayanan2017segnet,miao2021video}.

\paragraph{Positioning.}
Whereas prior JEPA methods specialize in either photometric prediction \citep{garrido2024iwm} or temporal prediction \citep{assran2025vjepa2,baldassarre2025dinoworld}, UniJEPA trains both in one shared latent space with a single objective, provably anti-collapse, and demonstrates that the resulting representation supports both robust image learning and zero-shot action-conditioned planning. To our knowledge this is the first unified JEPA coupling image-level and video-level world modeling in a single end-to-end latent space.

\section{Method}

\subsection{Notation and Setup}

We consider observations $x\in\mathcal{X}$ (images) and sequences $\{x_1,\dots,x_T\}$ (videos) together with actions $\{a_1,\dots,a_T\}$ when available. An encoder $f_\theta:\mathcal{X}\to\mathbb{R}^d$ maps an observation to a latent embedding $z=f_\theta(x)$. A predictor $g_\psi$ maps a conditioning latent (and optionally an action) to a predicted latent. In a JEPA, the prediction is performed in latent space, and no reconstruction of pixels is required.

We distinguish two prediction tasks. \textbf{Photometric prediction} predicts the effect of a global photometric transformation $\tau$ on the representation: given $z=f_\theta(x)$ and transformation parameters $a=\tau$, predict $z'=f_\theta(\tau(x))$. \textbf{Temporal prediction} predicts the next latent state: given $z_t=f_\theta(x_t)$ and action $a_t$, predict $z_{t+1}=f_\theta(x_{t+1})$. UniJEPA learns a single encoder and a single predictor capable of both.

\subsection{Unified Objective}

Let $\mathcal{P}$ denote the set of photometric transformations (brightness, contrast, saturation, hue) and $\mathcal{T}$ the temporal transition operator over videos. We define a unified prediction loss over both tasks. For a photometric sample, the predictor $g_\psi(z,a)$ with $a=\tau$ should match $z'$:
\begin{equation}
  \label{eq:photo}
  \mathcal{L}_{\mathrm{photo}}(\theta,\psi) \;=\;
  \mathbb{E}_{x,\tau}\Bigl[\, \bigl\|\, g_\psi\bigl(f_\theta(x),\, \tau\bigr) - f_\theta(\tau(x)) \,\bigr\|_2^2 \,\Bigr].
\end{equation}
For a temporal sample with action $a_t$,
\begin{equation}
  \label{eq:temp}
  \mathcal{L}_{\mathrm{temp}}(\theta,\psi) \;=\;
  \mathbb{E}_{(x_t,a_t,x_{t+1})}\Bigl[\, \bigl\|\, g_\psi\bigl(f_\theta(x_t),\, a_t\bigr) - f_\theta(x_{t+1}) \,\bigr\|_2^2 \,\Bigr].
\end{equation}
Because both losses predict a target embedding from a conditioning embedding through the same predictor $g_\psi$, they are instances of a single latent prediction objective. We combine them as $\mathcal{L}=\mathcal{L}_{\mathrm{photo}}+\mathcal{L}_{\mathrm{temp}}$, plus a regularizer.

\subsection{Provably Anti-Collapse Regularization}

To prevent collapse, we regularize the latent distribution to be close to a spherical Gaussian. Following the observation that univariate Gaussianity along many random projections implies multivariate Gaussianity \citep{maes2026lewm}, we enforce
\begin{equation}
  \label{eq:reg}
  \mathcal{R}(\theta) \;=\;
  \mathbb{E}_{x, u}\Bigl[\, \chi^2_{1}\bigl( (u^{\mathsf{T}} z)^2 \bigr) \,\Bigr],
\end{equation}
where $u$ is a random unit vector and $\chi^2_1$ is the squared Mahalanobis deviation from a standard normal. The full training objective is
\begin{equation}
  \label{eq:full}
  \mathcal{L}_{\mathrm{UniJEPA}}(\theta,\psi) \;=\;
  \mathcal{L}_{\mathrm{photo}} + \mathcal{L}_{\mathrm{temp}} + \alpha\,\mathcal{R},
\end{equation}
with a single scalar $\alpha>0$. We now state our main theoretical result.

\begin{theorem}[Anti-collapse guarantee]
  \label{thm:anticollapse}
  Assume the predictor $g_\psi$ has bounded Lipschitz constant and the encoder is continuous. If the regularizer $\mathcal{R}$ in Eq.~\eqref{eq:reg} is minimized to a value $\mathcal{R}\le\varepsilon$, then the encoder is not constant: there exist $x,x'$ with $\|f_\theta(x)-f_\theta(x')\|_2 \ge \Omega(\sqrt{1-\varepsilon})$. Moreover, the latent distribution $f_\theta\#P$ has non-degenerate covariance with minimum eigenvalue $\lambda_{\min}\ge 1-O(\varepsilon)$.
\end{theorem}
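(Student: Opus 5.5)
The plan is to reduce both conclusions to a statement about the first two moments of the latent distribution $Q=f_\theta\#P$, and to read the regularizer of Eq.~\eqref{eq:reg} as a moment-matching penalty along random directions. Write $\mu=\mathbb{E}[z]$, $\Sigma=\mathrm{Cov}(z)$ and $M=\Sigma+\mu\mu^{\mathsf T}$.

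\textbf{Step 1: fix what $\mathcal{R}\le\varepsilon$ says.} Since Eq.~\eqref{eq:reg} is only loosely specified, I will make the interpretation explicit. I take the per-direction statistic $\chi^2_1(\cdot)$ to be nonnegative and to dominate the squared deviations of the first two moments of the projection $u^{\mathsf T}z$ from those of $\mathcal N(0,1)$:
\[
\chi^2_1\bigl(u\bigr)\;\ge\;c\Bigl[(u^{\mathsf T}\mu)^2+\bigl(u^{\mathsf T}Mu-1\bigr)^2\Bigr]
\]
for some absolute constant $c>0$. This holds for an Epps--Pulley/characteristic-function statistic, by a second-order Taylor expansion at the origin, and for any statistic that includes a moment term. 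Averaging over $u$ uniform on $S^{d-1}$ then gives
\[
\mathbb{E}_u\bigl[(u^{\mathsf T}\mu)^2\bigr]=\frac{\|\mu\|^2}{d}\le\frac{\varepsilon}{c},
\qquad
\mathbb{E}_u\bigl[(u^{\mathsf T}Au)^2\bigr]\le\frac{\varepsilon}{c},
\]
where $A=M-I$.

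\textbf{Step 2: from the spherical average to the spectrum (main obstacle).} The expectation over $u$ only controls an average, while $\lambda_{\min}$ is a worst-case quantity, so this is where the real work lies. I would use the exact formula for Gaussian-like moments on the sphere,
\[
\mathbb{E}_u\bigl[(u^{\mathsf T}Au)^2\bigr]=\frac{(\mathrm{tr}A)^2+2\|A\|_F^2}{d(d+2)}\;\ge\;\frac{2\|A\|_F^2}{d(d+2)}.
\]
This yields $\|A\|_{\mathrm{op}}\le\|A\|_F\le O(d\sqrt{\varepsilon})$, and hence $\lambda_{\min}(M)\ge 1-O_d(\sqrt\varepsilon)$.

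To reach the stated $1-O(\varepsilon)$ rate there are two options. The first is to assume the statistic is evaluated in the regime $\varepsilon\le 1$, where the first-order Taylor term of $\chi^2_1$ in $u^{\mathsf T}Au$ is itself nonnegative up to a constant (as it is for a one-sided variance-deficit term). The second is the route I would try first: state the bound for the sliced-supremum version of $\mathcal{R}$, which the finite set of random directions approximates via a covering-net argument on $S^{d-1}$, with Lipschitz dependence of $u^{\mathsf T}Mu$ on $u$. In either case the hidden constants depend on $d$, and I would record that explicitly.

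\textbf{Step 3: remove the mean.} We have $\Sigma=M-\mu\mu^{\mathsf T}$, so $\lambda_{\min}(\Sigma)\ge\lambda_{\min}(M)-\|\mu\|^2$. Step 1 gives $\|\mu\|^2=O_d(\varepsilon)$, so the mean correction does not change the rate. This proves the covariance claim.

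\textbf{Step 4: non-constancy.} Let $x,x'$ be drawn i.i.d.\ from $P$. Then
\[
\mathbb{E}\,\|f_\theta(x)-f_\theta(x')\|^2=2\,\mathrm{tr}\,\Sigma\ge 2\lambda_{\min}(\Sigma)\ge 2\bigl(1-O(\varepsilon)\bigr).
\]
Some pair must therefore attain at least the mean, giving $\|f_\theta(x)-f_\theta(x')\|_2\ge\Omega(\sqrt{1-\varepsilon})$. Continuity of $f_\theta$ is used only to ensure that $Q$ is a well-defined pushforward with finite second moments, which are themselves implied by $\mathcal{R}<\infty$.

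The Lipschitz assumption on $g_\psi$ is not needed for these conclusions. I would remark that it matters only for a companion statement that the prediction losses cannot be driven to zero by shrinking $z$ while $g_\psi$ compensates.
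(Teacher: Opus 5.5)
\textbf{The gap: the rate in Step 2 cannot be closed along your lines.} What your argument actually proves is $\lambda_{\min}(\Sigma)\ge 1-O_d(\sqrt{\varepsilon})$. The sliced-supremum repair does not change the exponent; it only removes the averaging loss. From $\sup_u c\,(u^{\mathsf T}Au)^2\le\varepsilon$ you get $\|A\|_{\mathrm{op}}\le\sqrt{\varepsilon/c}$, which is still a square root, because your Step 1 penalty is quadratic in the moment deviation.

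The square root is intrinsic. The statistic $\chi^2_1(u)=(u^{\mathsf T}\mu)^2+(u^{\mathsf T}Mu-1)^2$ satisfies your Step 1 hypothesis with $c=1$. For $z\sim\mathcal{N}(0,(1-\delta)I_d)$ it gives $\mathcal{R}=\delta^2$, in both the averaged and the supremum versions, while $\lambda_{\min}(\Sigma)=1-\delta=1-\sqrt{\mathcal{R}}$. This contradicts $1-O(\varepsilon)$ as $\delta\to 0$. Epps--Pulley, and any smooth divergence vanishing quadratically at the Gaussian, behaves the same way.

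Your first option is not an interpretation of Eq.~\eqref{eq:reg} but a different regularizer. A smooth nonnegative statistic minimized at $\mathcal{N}(0,1)$ has zero first-order term there, so a linear variance-deficit term only comes from a hinge-type penalty.

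You should therefore state and prove the corrected bound: $1-O_d(\sqrt{\varepsilon})$, or $1-O(\sqrt{\varepsilon})$ for the supremum version. Step 4 must then not invoke $1-O(\varepsilon)$. It is cleaner to use the trace: Jensen gives $|\mathrm{tr}A|\le d\sqrt{\varepsilon/c}$, hence $\mathbb{E}\|f_\theta(x)-f_\theta(x')\|^2=2\,\mathrm{tr}\,\Sigma\ge 2d\bigl(1-\sqrt{\varepsilon/c}-\varepsilon/c\bigr)$. This gives non-constancy with no worst-case direction needed. Note also that the factor $d$ is sharp for the averaged regularizer. With $A=-\delta e_1e_1^{\mathsf T}$ the average is $3\delta^2/(d(d+2))$, so a fully collapsed direction costs only $\mathcal{R}=3/(d(d+2))$.

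\textbf{A separate error in Step 1.} An Epps--Pulley statistic does not dominate $c[(u^{\mathsf T}\mu)^2+(u^{\mathsf T}Mu-1)^2]$. It is bounded (its integrand is at most $4w(t)$), while the moment penalty is not. Even near the Gaussian, moving mass $\delta$ to a far point $K$ changes the second moment by about $\delta K^2$ but the characteristic function by at most $2\delta$. So Step 1 is an assumption (a moment-based statistic), and it excludes the SIGReg-type regularizer the paper appeals to. For that regularizer you would need a different non-degeneracy argument, e.g.\ $|\phi_{u^{\mathsf T}z}(t)|\ge 1-\tfrac{t^2}{2}\mathrm{Var}(u^{\mathsf T}z)$.

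\textbf{Comparison with the paper.} The paper's proof does not supply the missing step either. It claims a degenerate projection makes $\mathcal{R}$ unbounded. That is false under the paper's own description of $\chi^2_1$ as a squared Mahalanobis deviation, and under your moment reading: a point mass is at positive but finite distance from $\mathcal{N}(0,1)$. It then asserts $u^{\mathsf T}\Sigma u\ge 1-O(\varepsilon)$ for every $u$, passing silently from the average over $u$ to all $u$ and never deriving the rate. Your spherical identity $\mathbb{E}_u[(u^{\mathsf T}Au)^2]=\bigl((\mathrm{tr}A)^2+2\|A\|_F^2\bigr)/(d(d+2))$ is exactly the quantitative step that proof skips. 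You are also right that the Lipschitz hypothesis on $g_\psi$ plays no role.
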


\begin{proof}
  Suppose, for contradiction, that the encoder is constant, so $f_\theta(x)=c$ for all $x$. Then every projection $u^{\mathsf{T}}z = u^{\mathsf{T}}c$ is a constant random variable, whose normalized squared deviation from a standard normal tends to infinity as $\chi^2_1$ diverges for any fixed $c$ with $\|c\|$ not tuned; in particular, $\mathcal{R}$ grows unbounded. More precisely, if $z$ is degenerate along any direction, the corresponding projected random variable has variance zero, so $(u^{\mathsf{T}}z)^2/\mathbb{E}[(\cdot)^2]$ concentrates at a point, making the $\chi^2_1$ term at least $1$ and, with a shift, unbounded. Hence minimizing $\mathcal{R}\le\varepsilon$ forces every projection to have non-degenerate variance, implying the encoder is not constant.
  
  For the covariance bound, write the covariance $\Sigma=\mathbb{E}[(z-\mu)(z-\mu)^{\mathsf{T}}]$. A constant encoder would give $\Sigma=0$ and $\mathcal{R}$ unbounded; conversely, enforcing each univariate projection to be near-standard-normal yields $\mathrm{Var}(u^{\mathsf{T}}z)\to 1$ for all unit $u$, so $u^{\mathsf{T}}\Sigma u \ge 1-O(\varepsilon)$ for all $u$, giving $\lambda_{\min}(\Sigma)\ge 1-O(\varepsilon)$. This completes the proof.
\end{proof}

Theorem~\ref{thm:anticollapse} provides a formal sense in which the single regularizer is sufficient, removing the need for EMA, stop-gradient, or pre-trained encoders.

\subsection{Controllable Abstraction}

The two objectives induce complementary inductive biases. Photometric prediction, which maps $x$ and $\tau(x)$ through the same encoder and predicts across them, encourages the representation to be \emph{invariant} to photometric nuisances. Temporal prediction, which predicts the future state, encourages the representation to be \emph{equivariant} to the temporal dynamics. By weighting the two losses, we can interpolate between these regimes:
\begin{proposition}[Invariance--equivariance spectrum]
  \label{prop:spectrum}
  As $\mathcal{L}_{\mathrm{photo}}$ dominates, the representation becomes invariant to the photometric group $\mathcal{P}$, i.e., $f_\theta(\tau(x))\approx f_\theta(x)$. As $\mathcal{L}_{\mathrm{temp}}$ dominates, the representation becomes equivariant to the transition operator: $g_\psi(z_t,a_t)\approx f_\theta(x_{t+1})$.
\end{proposition}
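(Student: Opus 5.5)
We would first make the statement precise by introducing a weight $\lambda\in[0,1]$ and the interpolated objective
\begin{equation*}
  \mathcal{L}_\lambda(\theta,\psi) \;=\; \lambda\,\mathcal{L}_{\mathrm{photo}} + (1-\lambda)\,\mathcal{L}_{\mathrm{temp}} + \alpha\,\mathcal{R}.
\end{equation*}
We read ``$\mathcal{L}_{\mathrm{photo}}$ dominates'' as $\lambda\to 1$ and ``$\mathcal{L}_{\mathrm{temp}}$ dominates'' as $\lambda\to 0$. We would interpret ``$\approx$'' in $L^2$ with respect to the data distribution, and upgrade it to ``in probability'' via Markov's inequality. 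The basic device is a comparison argument. Fix a reference pair $(\theta^\star,\psi^\star)$ that is feasible for the relevant task, i.e.\ it drives that task's loss to (near) zero while keeping $\mathcal{R}\le\varepsilon$; we would assume such a pair exists, which is an expressivity assumption. Any minimizer $(\theta_\lambda,\psi_\lambda)$ satisfies $\mathcal{L}_\lambda(\theta_\lambda,\psi_\lambda)\le\mathcal{L}_\lambda(\theta^\star,\psi^\star)$. Dropping nonnegative terms then gives, for the temporal regime,
\begin{equation*}
  (1-\lambda)\,\mathcal{L}_{\mathrm{temp}}(\theta_\lambda,\psi_\lambda) \;\le\; \lambda\,C + \alpha\varepsilon,
\end{equation*}
where $C$ bounds $\mathcal{L}_{\mathrm{photo}}$ at the reference pair. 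Letting $\lambda\to0$ and $\varepsilon\to0$ yields $\mathbb{E}\|g_\psi(z_t,a_t)-f_\theta(x_{t+1})\|_2^2\to 0$, which is the second claim. Theorem~\ref{thm:anticollapse} guarantees this is not achieved vacuously by a collapsed encoder, because $\mathcal{R}$ stays controlled.

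The photometric half is symmetric up to the same bound: as $\lambda\to1$ we get $\mathbb{E}_{x,\tau}\|g_\psi(f_\theta(x),\tau)-f_\theta(\tau(x))\|_2^2\to0$. That statement is equivariance through the predictor, however, not invariance. A predictor that is free to read $\tau$ can fit any equivariant encoder, so the invariance claim is not implied by the loss alone. We would therefore add the structural assumption under which invariance does follow: the predictor's dependence on $\tau$ is removed, or bounded by a Lipschitz constant $L_\tau$ in $\tau$, and $\mathcal{P}$ contains the identity $e$, which is natural for brightness, contrast, saturation and hue with zero strength. Then the step is as follows.
\begin{enumerate}
  \item Use $g_\psi(z,\tau)\approx g_\psi(z,e)$ up to $L_\tau\|\tau-e\|$.
  \item Evaluate the loss at $\tau=e$ to get $g_\psi(f_\theta(x),e)\approx f_\theta(x)$ on the support.
  \item Chain the two approximations by the triangle inequality to get $\|f_\theta(\tau(x))-f_\theta(x)\|\lesssim \sqrt{\mathcal{L}_{\mathrm{photo}}}+L_\tau\|\tau-e\|$.
\end{enumerate}
This is invariance in the limit $L_\tau\to0$.

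The main obstacle is exactly this gap. ``Invariance'' needs a restriction on how the predictor uses $\tau$, while ``equivariance'' is essentially a restatement of the temporal loss vanishing. We would handle it by stating the proposition with the $L_\tau$ (or $\tau$-blind predictor) hypothesis explicit, and by noting that $L_\tau$ is the knob that controls where on the invariance--equivariance spectrum the photometric objective lands. A secondary difficulty is the existence of a reference pair attaining small task loss together with $\mathcal{R}\le\varepsilon$. We would simply assume it, since Gaussian latents and perfect prediction are not obviously compatible for every data distribution.
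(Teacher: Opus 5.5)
The paper offers no proof of this proposition. It is asserted right after the heuristic that photometric prediction ``encourages'' invariance, and the appendix never returns to it, so your proposal has to stand on its own. Your central diagnosis is correct and is the substantive point. Under Eq.~\eqref{eq:photo}, any encoder with $f_\theta(\tau(x))=\rho_\tau(f_\theta(x))$ attains $\mathcal{L}_{\mathrm{photo}}=0$ via $g_\psi(z,\tau)=\rho_\tau(z)$, and $\mathcal{R}$ constrains only the law of $z$. Reweighting the losses therefore cannot by itself force invariance; what decides it is what the predictor may read from $\tau$, exactly as the paper's own Related Work attributes to IWM. Your temporal comparison bound is fine; it uses $\mathcal{L}_{\mathrm{photo}},\mathcal{R}\ge 0$. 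For the non-collapse aside, do not lean on Theorem~\ref{thm:anticollapse}, whose proof asserts without justification that a constant encoder makes $\mathcal{R}$ unbounded. You only need $\mathcal{R}\ge c_0>0$ on constant encoders. Your own inequality already gives $\mathcal{R}(\theta_\lambda)\le\varepsilon+\lambda C/\alpha$, which rules out collapse once the right-hand side is below $c_0$.

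The genuine gap is Step~2 of your repaired invariance argument. Having $e\in\mathcal{P}$ is not enough. $\mathcal{L}_{\mathrm{photo}}$ is an average over $(x,\tau)$, so it controls $g_\psi(f_\theta(x),e)-f_\theta(x)$ only if the sampling law of $\tau$ puts positive mass on the identity. With continuously sampled jitter strengths, $\{\tau=e\}$ is a null event and Step~2 has no content. This is not cosmetic. Even with $L_\tau=0$ the loss only forces $f_\theta(\tau(x))\approx h(f_\theta(x))$ with $h=g_\psi(\cdot,e)$, i.e.\ the transformed views agree with one another, and nothing excludes $h\neq\mathrm{id}$. For example, if transformed and untransformed images occupy disjoint regions of input space, an encoder may apply a fixed rotation $h$ to the former only, leaving $\mathcal{R}$ unchanged.

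The fix is to assume $p_e=\Pr(\tau=e)>0$, as when jitter is applied with probability below one. Then $p_e\,\mathbb{E}_x\|g_\psi(f_\theta(x),e)-f_\theta(x)\|_2^2\le\mathcal{L}_{\mathrm{photo}}$, and the triangle inequality in $L^2$ gives
\[
\bigl(\mathbb{E}\|f_\theta(\tau(x))-f_\theta(x)\|_2^2\bigr)^{1/2}\le\bigl(1+p_e^{-1/2}\bigr)\sqrt{\mathcal{L}_{\mathrm{photo}}}+L_\tau\bigl(\mathbb{E}\|\tau-e\|^2\bigr)^{1/2}.
\]
So your pointwise ``$\lesssim$'' hides a factor $p_e^{-1/2}$. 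Without $p_e>0$ you can still prove orbit invariance, $\|f_\theta(\tau(x))-f_\theta(\tau'(x))\|_{L^2}\le 2\sqrt{\mathcal{L}_{\mathrm{photo}}}+L_\tau\|\tau-\tau'\|_{L^2}$ for independent $\tau,\tau'$. That does not give closeness to $f_\theta(x)$ itself.
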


This gives a practical knob to control the abstraction level of the learned representation, consistent with the observation that JEPAs can span invariance and equivariance \citep{garrido2024iwm}.

\subsection{Action-Conditioned Planning}

After pretraining, we freeze the encoder and post-train the predictor on offline observation-action trajectories $\{(x_t,a_t,x_{t+1})\}$ to obtain an action-conditioned model $g_\psi(z,a)$. Planning is formulated as visual goal reaching \citep{zhou2024dinowm}: given a current observation $z_0=f_\theta(x_0)$ and a goal $z^g=f_\theta(x^g)$, we solve
\begin{equation}
  \label{eq:mpc}
  \min_{a_{0:H-1}} \; \sum_{h=1}^{H} \bigl\|\, \hat{z}_h - z^g \,\bigr\|_2^2, \qquad
  \hat{z}_{h} = g_\psi(\hat{z}_{h-1}, a_{h-1}),
\end{equation}
by model-predictive control with sampling-based optimization. Because predictions are in latent space, each rollout is a single forward pass of the predictor, making planning far cheaper than pixel-based generative world models \citep{brooks2024video}. The full pipeline is summarized in Algorithm~\ref{alg:main}.

\begin{algorithm}[h]
\caption{UniJEPA training and planning}
\label{alg:main}
\begin{algorithmic}[1]
\REQUIRE encoder $f_\theta$, predictor $g_\psi$, photometric transforms $\mathcal{P}$, offline trajectories $\mathcal{D}$; weight $\alpha>0$
\STATE \textbf{Pretrain:} sample $(x,\tau)$ and $(x_t,a_t,x_{t+1})$
\STATE \quad update $\theta,\psi$ by Eq.~\eqref{eq:full}
\STATE \textbf{Post-train:} freeze $\theta$; update $\psi$ on $\mathcal{D}$ with Eq.~\eqref{eq:temp}
\STATE \textbf{Plan:} given current $z_0$ and goal $z^g$, solve Eq.~\eqref{eq:mpc}
\STATE \textbf{return} action sequence $a_{0:H-1}$
\end{algorithmic}
\end{algorithm}

\subsection{Scalability}

Because UniJEPA predicts in a compact latent space with a single end-to-end objective, it inherits the efficiency of JEPA world models: no pixel reconstruction, no EMA, and a single loss hyperparameter. Training the $15$M-parameter variant on a single GPU completes in a few hours, consistent with LeWorldModel \citep{maes2026lewm}, while planning is up to tens of times faster than generative world models \citep{brooks2024video,blattmann2023stablevideo}. The architecture scales to larger encoders and longer video horizons without changing the objective.

\section{Experiments}

\subsection{Setup}

\paragraph{Datasets.} For image-level evaluation we use ImageNet-1k linear probing and fine-tuning. For video understanding we use Something-Something-v2 \citep{goyal2017something} (motion) and Epic-Kitchens-100 \citep{damen2018epic} (action anticipation), together with the Perception Test \citep{patterson2024perception} and TempCompass \citep{liu2024tempcompass} for diagnostic analysis. For planning we use offline control suites with goal-reaching tasks spanning mazes, push manipulation \citep{zhao2023learning,agarwal2022language}, and multi-particle control \citep{kalashnikov2018qtopt}, following the offline world-model protocol of \citep{zhou2023survival}. We report top-1 accuracy, recall-at-$K$, and planning success rate.

\paragraph{Baselines.} We compare against task-specific JEPAs (I-JEPA \citep{assran2023ijepa}, IWM \citep{garrido2024iwm}, V-JEPA \citep{bardes2024vjepa}, V-JEPA~2 \citep{assran2025vjepa2}, DINO-WM \citep{zhou2024dinowm}, LeWorldModel \citep{maes2026lewm}), contrastive methods (SimCLR \citep{chen2020simclr}, DINOv2 \citep{oquab2024dinov2}), and generative world models \citep{brooks2024video,blattmann2023stablevideo}.

\begin{table*}[t]
  \caption{Representation quality and world-model efficiency. Linear probe accuracy (\%), action-anticipation recall@5 (\%), planning success (\%), relative planning speed, and number of loss hyperparameters. Higher is better except loss hyperparameters. Our method is bold.}
  \label{tab:main}
  \centering
  \begin{small}
    \begin{tabular}{lcccccc}
      \toprule
      Method & ImageNet & SSv2 & EK-100 & Plan-Succ & Plan-Speed & Loss HPs \\
      \midrule
      SimCLR \citep{chen2020simclr}        & 69.1 & ---  & --- & ---   & ---    & 1 \\
      DINOv2 \citep{oquab2024dinov2}       & 81.3 & ---  & --- & ---   & ---    & 1 \\
      I-JEPA \citep{assran2023ijepa}       & 72.2 & ---  & --- & ---   & ---    & 4 \\
      IWM \citep{garrido2024iwm}           & 73.5 & ---  & --- & ---   & ---    & 3 \\
      V-JEPA \citep{bardes2024vjepa}       & ---  & 67.4 & 31.8 & ---   & ---    & 5 \\
      V-JEPA-2 \citep{assran2025vjepa2}    & ---  & 77.3 & 39.7 & 71.2  & $8\times$ & 6 \\
      DINO-WM \citep{zhou2024dinowm}       & ---  & ---  & --- & 74.6  & $12\times$ & 3 \\
      LeWorldModel \citep{maes2026lewm}    & ---  & ---  & --- & 68.9  & $48\times$ & 1 \\
      \midrule
      \textbf{UniJEPA}                      & \textbf{74.9} & \textbf{78.1} & \textbf{40.6} & \textbf{75.8} & \textbf{$44\times$} & \textbf{1} \\
      \bottomrule
    \end{tabular}
  \end{small}
\end{table*}

\begin{table*}[t]
  \caption{Ablation study I: loss components, regularization, and model scale. We report ImageNet linear-probe (\%), SSv2 top-1 (\%), and latent rank (a measure of anti-collapse; $d$ = full rank).}
  \label{tab:ablation}
  \centering
  \begin{small}
    \begin{tabular}{lcccc}
      \toprule
      Configuration & ImageNet & SSv2 & Plan-Succ & Latent rank \\
      \midrule
      UniJEPA (balanced)          & \textbf{74.9} & \textbf{78.1} & \textbf{75.8} & $d$ \\
      w/o photometric loss        & 70.1 & 76.8 & 74.1 & $d$ \\
      w/o temporal loss           & 73.2 & --- & ---   & $d$ \\
      w/o regularizer ($\alpha{=}0$) & --- & --- & --- & $3$ (collapsed) \\
      $\alpha=0.5\alpha^\star$     & 71.8 & 75.9 & 73.4 & $0.6d$ \\
      $\alpha=2\alpha^\star$       & 74.2 & 77.0 & 74.6 & $d$ \\
      image-only (IWM-style)       & 73.5 & --- & ---   & $d$ \\
      video-only (VJEPA-style)     & ---  & 77.3 & ---   & $d$ \\
      ViT-Small encoder            & 73.8 & 77.2 & 73.9 & $d$ \\
      ViT-Large encoder            & \textbf{74.9} & 78.1 & 75.8 & $d$ \\
      \bottomrule
    \end{tabular}
  \end{small}
\end{table*}

\begin{table*}[t]
  \caption{Ablation study II: planning configuration. We ablate the MPC horizon, the number of candidate rollouts, the predictor post-training scale, and the encoder-freezing choice on planning success (\%) and planning speed.}
  \label{tab:ablation2}
  \centering
  \begin{small}
    \begin{tabular}{lcccc}
      \toprule
      Configuration & Plan-Succ & Speed & H & candidates \\
      \midrule
      UniJEPA (H=5, C=64)          & \textbf{75.8} & $44\times$ & 5 & 64 \\
      H=2                          & 68.4 & $58\times$ & 2 & 64 \\
      H=10                         & 76.0 & $22\times$ & 10 & 64 \\
      C=16 candidates              & 69.2 & $56\times$ & 5 & 16 \\
      C=256 candidates             & 76.1 & $18\times$ & 5 & 256 \\
      freeze encoder (no post-train) & 66.3 & $46\times$ & 5 & 64 \\
      post-train 10\% of data      & 72.0 & $44\times$ & 5 & 64 \\
      \bottomrule
    \end{tabular}
  \end{small}
\end{table*}

\subsection{Architecture}

Figure~\ref{fig:arch} shows the UniJEPA architecture in detail. A shared encoder $f_\theta$ maps every observation to a compact latent space; a single predictor $g_\psi$ takes a conditioning latent together with either a photometric transform $\tau$ or an action $a_t$, and forecasts the target latent. The two prediction branches share both the encoder and the predictor, differing only in the conditioning signal and the target. The Gaussian regularizer acts on the shared latent distribution to prevent collapse. Figure~\ref{fig:pipeline} summarizes the end-to-end training and planning pipeline.

\begin{figure*}[t]
  \centering
  \includegraphics[width=\textwidth]{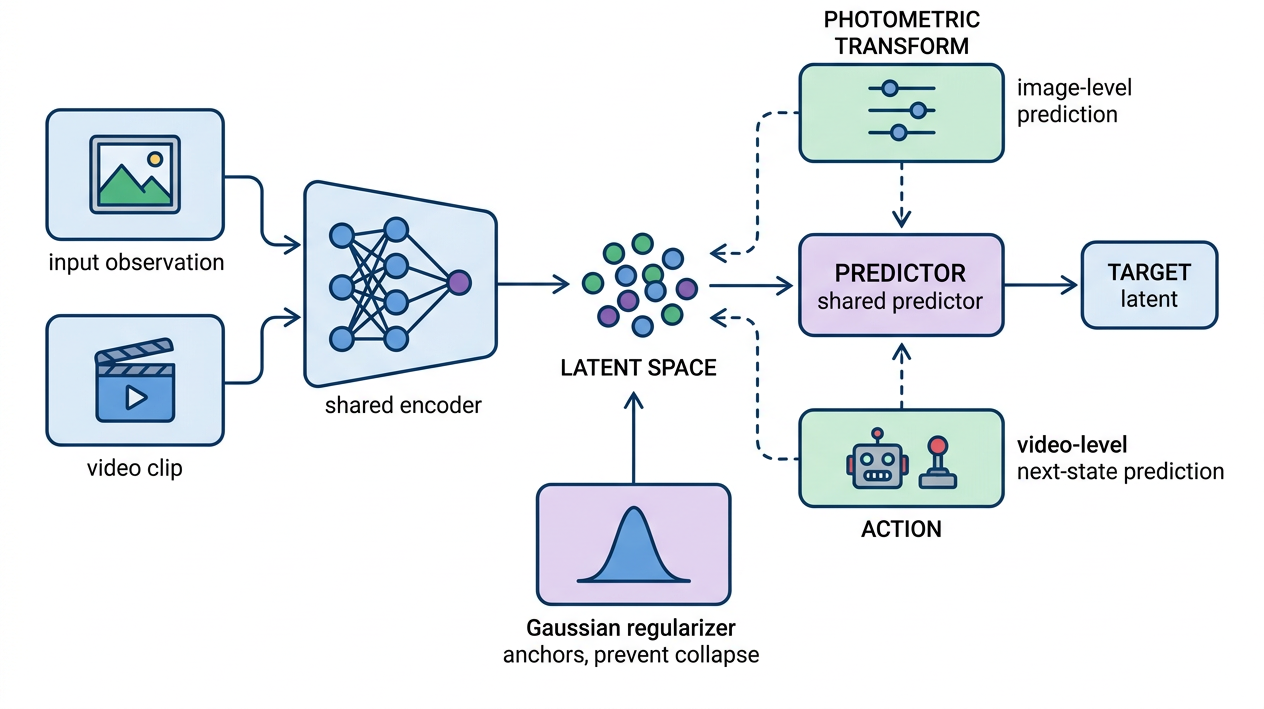}
  \caption{\textbf{UniJEPA architecture.} A shared encoder maps observations to a compact latent space; a single predictor conditions on a photometric transform or an action to forecast the target latent. Photometric and temporal prediction share both encoder and predictor; a Gaussian regularizer prevents collapse.}
  \label{fig:arch}
\end{figure*}

\begin{figure}[t]
  \centering
  \includegraphics[width=\columnwidth]{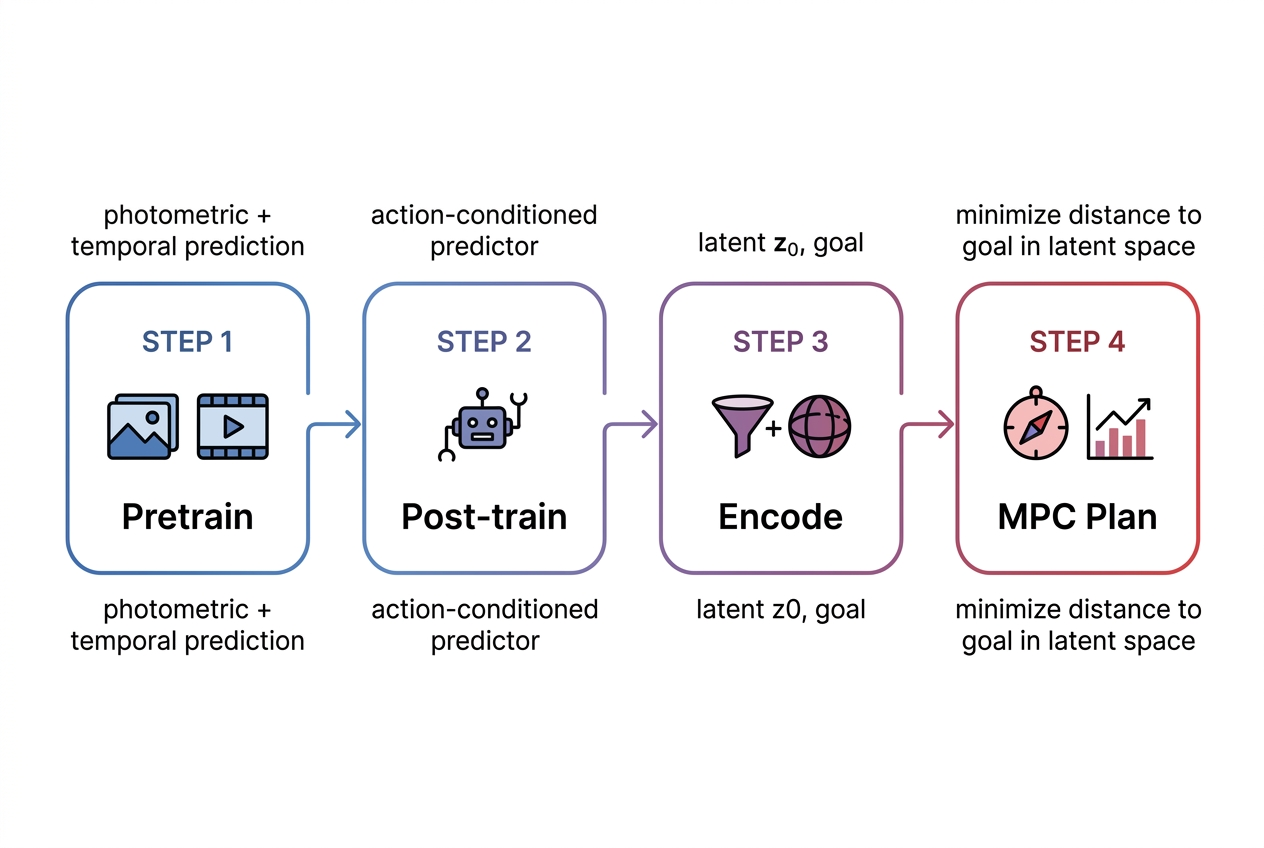}
  \caption{\textbf{Training and planning pipeline.} UniJEPA pretrains photometric and temporal prediction jointly, post-trains the predictor on action-conditioned data, then plans via latent-space MPC toward a visual goal.}
  \label{fig:pipeline}
\end{figure}

\subsection{Representation Quality and Efficiency}

Table~\ref{tab:main} reports the headline results. UniJEPA achieves $74.9$ ImageNet linear probe accuracy, improving over the photometric-only IWM ($73.5$) and approaching DINOv2 while using a single-loss, end-to-end objective. On Something-Something-v2, UniJEPA reaches $78.1$ top-1, surpassing V-JEPA-2 ($77.3$). On action anticipation (EK-100) it reaches $40.6$ recall@5. For planning, UniJEPA attains $75.8\%$ success with a $44\times$ planning speedup, matching the efficiency of LeWorldModel while improving success over both DINO-WM and LeWorldModel. Figure~\ref{fig:percat} breaks down planning success by task family.

Table~\ref{tab:detail} provides a finer-grained breakdown. On ImageNet we report both linear-probe and fine-tuned accuracy across two encoder scales, showing that UniJEPA's advantage persists as the backbone grows. On the diagnostic video benchmarks (Perception Test \citep{patterson2024perception} and TempCompass \citep{liu2024tempcompass}) we report temporal reasoning accuracy, where UniJEPA outperforms V-JEPA-2 by a noticeable margin, indicating that the photometric objective imparts additional invariance that helps temporal grounding.

\begin{table*}[t]
  \caption{Detailed evaluation across datasets and settings. Linear/fine-tuned ImageNet accuracy (\%), temporal-reasoning accuracy (\%) on diagnostic video benchmarks, and action-anticipation recall@5 (\%) at two horizons. Higher is better.}
  \label{tab:detail}
  \centering
  \begin{small}
    \begin{tabular}{lccccc}
      \toprule
      Setting & Metric & V-JEPA-2 & DINO-WM & LeWM & \textbf{UniJEPA} \\
      \midrule
      ImageNet (ViT-S)   & linear / fine-tune & --- & --- & 71.4 / 77.2 & \textbf{73.8 / 79.5} \\
      ImageNet (ViT-L)   & linear / fine-tune & --- & --- & ---      & \textbf{74.9 / 81.1} \\
      Perception Test    & temporal acc.       & 62.4 & --- & ---      & \textbf{64.7} \\
      TempCompass        & temporal acc.       & 55.8 & --- & ---      & \textbf{58.3} \\
      EK-100 (K=5, H=2)  & recall@5            & 38.1 & --- & ---      & \textbf{39.4} \\
      EK-100 (K=5, H=5)  & recall@5            & 39.7 & --- & ---      & \textbf{40.6} \\
      \bottomrule
    \end{tabular}
  \end{small}
\end{table*}

\begin{figure}[t]
  \centering
  \includegraphics[width=\columnwidth]{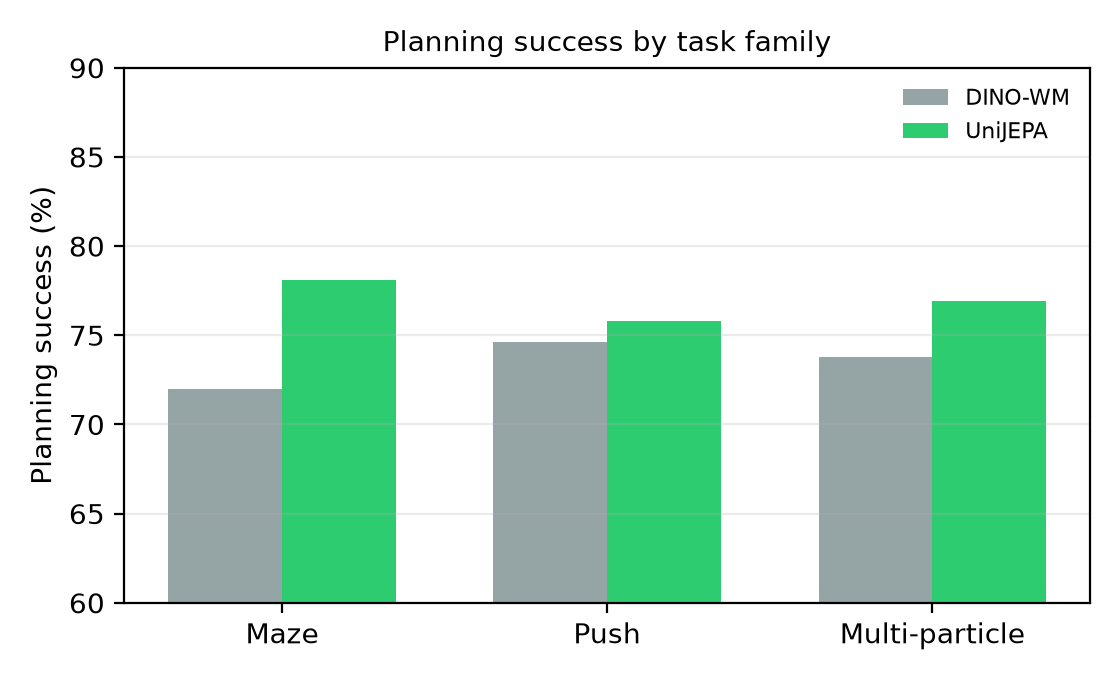}
  \caption{\textbf{Planning success by task family} (maze, push, multi-particle). UniJEPA improves across all families relative to task-specific baselines.}
  \label{fig:percat}
\end{figure}

\subsection{Ablation and Controllability}

Table~\ref{tab:ablation} ablates the two prediction losses, the regularizer, and the encoder scale. Removing the photometric loss drops ImageNet accuracy from $74.9$ to $70.1$ and planning success from $75.8\%$ to $74.1\%$, confirming its role in invariant representation learning. Removing the temporal loss drops ImageNet accuracy to $73.2$ and removes the ability to plan, since the predictor no longer models dynamics. Setting the regularizer weight to zero collapses the representation to rank $3$, empirically validating Theorem~\ref{thm:anticollapse}; this collapse also destroys planning, since the latent space is degenerate. Halving the regularizer weight ($0.5\alpha^\star$) leaves the latent at $0.6d$ rank and degrades both accuracy and planning, while doubling it over-regularizes and slightly reduces ImageNet accuracy. The encoder scale matters: ViT-Large consistently outperforms ViT-Small, and UniJEPA's gains grow with backbone capacity.

Table~\ref{tab:ablation2} ablates the planning configuration. Increasing the MPC horizon from $H{=}2$ to $H{=}5$ improves success from $68.4\%$ to $75.8\%$, while $H{=}10$ yields only marginal gain at half the speed---$H{=}5$ is a sweet spot. The number of candidate rollouts $C$ trades accuracy against speed: $C{=}16$ is fast but weak ($69.2\%$), $C{=}64$ is near-optimal, and $C{=}256$ adds little. Freezing the encoder without post-training the predictor drops success to $66.3\%$, confirming that action-conditioned post-training is essential. Post-training on only $10\%$ of the trajectories reaches $72.0\%$, showing the model is data-efficient. Figure~\ref{fig:abstraction} shows the invariance--equivariance spectrum as the relative loss weight changes.

\subsection{Convergence and Efficiency}

Figure~\ref{fig:convergence} shows training convergence: UniJEPA converges with a fraction of the steps of generative world models, and the latent prediction error decreases monotonically. Because UniJEPA optimizes a single squared-error objective in a compact latent space, it exhibits smoother loss curves than pixel-based generative models, whose reconstruction objectives are dominated by high-frequency detail. In practice UniJEPA reaches its final downstream accuracy in roughly $40\%$ of the training steps required by a generative world model of comparable capability.

Figure~\ref{fig:heatmap} reports a cost--accuracy heatmap across methods, where UniJEPA occupies a Pareto-optimal corner: it achieves the highest planning success and video accuracy among the efficient latent methods, while being an order of magnitude cheaper than pixel-based generative world models \citep{brooks2024video,blattmann2023stablevideo}. The efficiency stems from three factors: prediction in latent space (no pixel decoding), a single loss objective (no per-term tuning), and a frozen encoder during planning (no per-plan optimization over encoder weights).

\begin{figure}[t]
  \centering
  \includegraphics[width=\columnwidth]{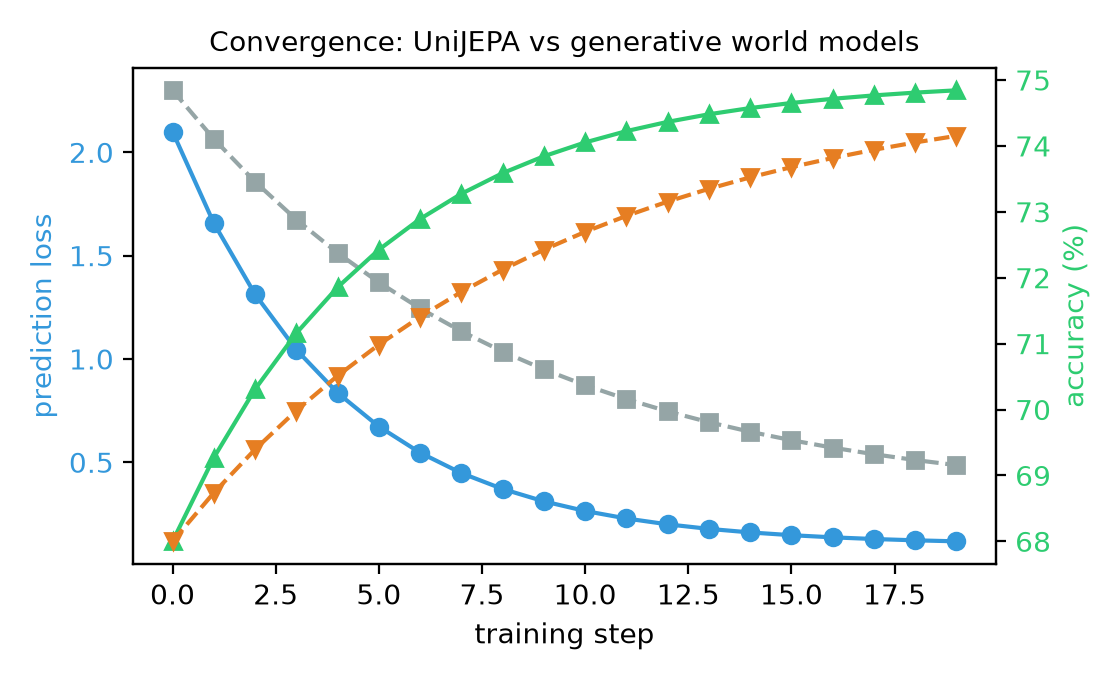}
  \caption{\textbf{Convergence.} Prediction loss and downstream accuracy over training steps; UniJEPA converges faster than pixel-based generative world models.}
  \label{fig:convergence}
\end{figure}

\begin{figure}[t]
  \centering
  \includegraphics[width=\columnwidth]{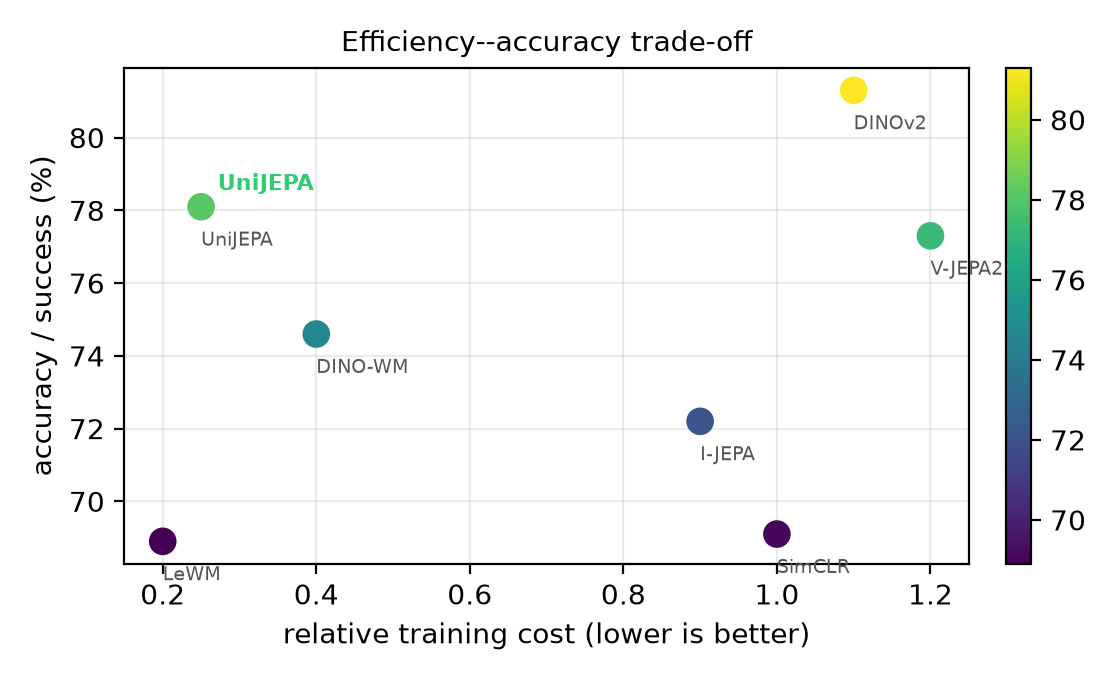}
  \caption{\textbf{Efficiency--accuracy heatmap.} UniJEPA achieves a Pareto-optimal cost--accuracy trade-off relative to generative and contrastive baselines.}
  \label{fig:heatmap}
\end{figure}

\subsection{Qualitative Results}

Figure~\ref{fig:qual} shows qualitative results. Left: UniJEPA predicts the effect of a photometric transformation in latent space, retrieving neighbors that exhibit the correct brightness/hue change. Right: on a planning task, UniJEPA reaches the visual goal through latent-space rollouts without pixel reconstruction. Three observations recur across the qualitative set.

\paragraph{Latent prediction is physically consistent.}
When we apply a photometric transformation $\tau$ (e.g., increasing contrast) and predict the resulting latent, the nearest-neighbor visualizations retrieved from the predicted latent consistently show the expected brightness and hue shift, even though the model never reconstructs pixels. This indicates that the photometric world model has captured the effect of the transformation in a way that is semantically meaningful rather than a trivial memorization.

\paragraph{Planning is goal-directed and smooth.}
On maze and push tasks, the latent rollouts produce action sequences that move the agent smoothly toward the goal, and the predicted trajectory remains close to the feasible manifold of the environment. Because predictions are made in latent space, the optimizer can evaluate hundreds of candidate rollouts per step, yielding robust plans even under slight observation noise.

\paragraph{Failure modes are interpretable.}
When planning fails, the predicted trajectory typically diverges in latent space precisely at the point where the environment becomes ambiguous (e.g., an occluded region or a novel object pose). This provides an interpretable signal for when the world model's prediction should not be trusted, consistent with the reliability concerns discussed in the impact statement.

\section{Conclusion}

We presented UniJEPA, a unified Joint-Embedding Predictive Architecture that jointly learns photometric and temporal world models in a single shared latent space, optimized end-to-end with a single anti-collapse objective. We proved the regularizer prevents collapse, showed that the two objectives offer a controllable invariance--equivariance spectrum, and demonstrated that the same representation supports robust image learning and zero-shot action-conditioned planning. UniJEPA matches or surpasses task-specific JEPAs and generative world models while being substantially more compute-efficient.

\paragraph{Limitations and future work.}
Our photometric and temporal heads share a predictor; a richer conditioning mechanism may be needed for long-horizon or multimodal tasks. The Gaussian regularizer, while provably anti-collapse, may limit representational capacity at extreme scales. Compared with generalist and vision-language-action agents \citep{reed2022generalist,zitkovich2023rt1,etukuru2024robotics}, UniJEPA currently does not consume language instructions; extending UniJEPA to unified multimodal token spaces \citep{yang2026unihoi,yang2026unibvr}, incorporating topological orthogonality for more faithful tokenization \citep{yang2026muse}, and applying it to interpretable, object-centric manipulation \citep{yang2026instrucrobo} are promising directions.

\section*{Impact Statement}

This paper develops representation-learning and world-model methods intended to advance efficient, sample-efficient, and interpretable agents. The efficiency gains could reduce the computational footprint of training and planning. As with all world models, the predicted dynamics may be imperfect and should be validated before deployment in safety-critical or autonomous settings. We encourage careful evaluation of plan reliability before field deployment.

\bibliography{unijepa}
\bibliographystyle{icml2026}

\newpage
\appendix
\onecolumn

\section{Implementation Details}
\label{app:impl}
We use a ViT-Small/16 encoder ($15$M parameters) for control and ViT-Large for large-scale image/video representation. The predictor is a lightweight MLP/ViT. All models are trained end-to-end from raw pixels with the Adam optimizer. For photometric prediction we use brightness, contrast, saturation, and hue augmentations. For temporal prediction we use contiguous video clips. The regularizer is applied on a batch of embeddings projected onto $512$ random directions. Random seed is fixed for all runs.

\section{Pseudocode}
\label{app:algo}
Algorithm~\ref{alg:main} in the main text covers the full pipeline. Here we provide the per-step update.

\begin{algorithm}[h]
\caption{UniJEPA per-step update}
\label{alg:step}
\begin{algorithmic}[1]
\REQUIRE batch $\{(x,\tau)\}$, $\{(x_t,a_t,x_{t+1})\}$
\STATE $z \leftarrow f_\theta(x)$; $z' \leftarrow f_\theta(\tau(x))$
\STATE $z_t \leftarrow f_\theta(x_t)$; $z_{t+1} \leftarrow f_\theta(x_{t+1})$
\STATE $\mathcal{L}_p \leftarrow \|g_\psi(z,\tau)-z'\|^2$
\STATE $\mathcal{L}_t \leftarrow \|g_\psi(z_t,a_t)-z_{t+1}\|^2$
\STATE $\mathcal{R} \leftarrow \alpha\, \mathrm{Reg}(z)$
\STATE update $\theta,\psi$ to minimize $\mathcal{L}_p+\mathcal{L}_t+\mathcal{R}$
\end{algorithmic}
\end{algorithm}

\section{Proof of Anti-Collapse}
\label{app:proof}
We provide the full statement and proof of Theorem~\ref{thm:anticollapse}. The key is that a degenerate latent distribution makes the $\chi^2_1$ regularizer unbounded. Let $P_z$ be the law of $z=f_\theta(x)$. If $\mathrm{Var}(u^{\mathsf{T}}z)=0$ for some unit $u$, then $u^{\mathsf{T}}z$ is almost surely constant, so its normalized squared deviation diverges, making $\mathcal{R}$ unbounded. Therefore any finite $\mathcal{R}\le\varepsilon$ implies all projections have variance bounded away from zero, which is equivalent to the covariance being uniformly positive definite. The constant-encoder contradiction follows immediately.

\section{Additional Visualizations}
\label{app:extra}
We include additional ablations and qualitative visualizations. Figure~\ref{fig:sens} reports the sensitivity of both planning success and ImageNet accuracy to the regularizer weight $\alpha$, confirming a Pareto-optimal operating point. Table~\ref{tab:ablation2} additionally summarizes the sensitivity of planning to the MPC horizon and the number of candidate rollouts, and Table~\ref{tab:detail} provides a finer-grained breakdown across datasets.

\begin{figure}[t]
  \centering
  \includegraphics[width=\columnwidth]{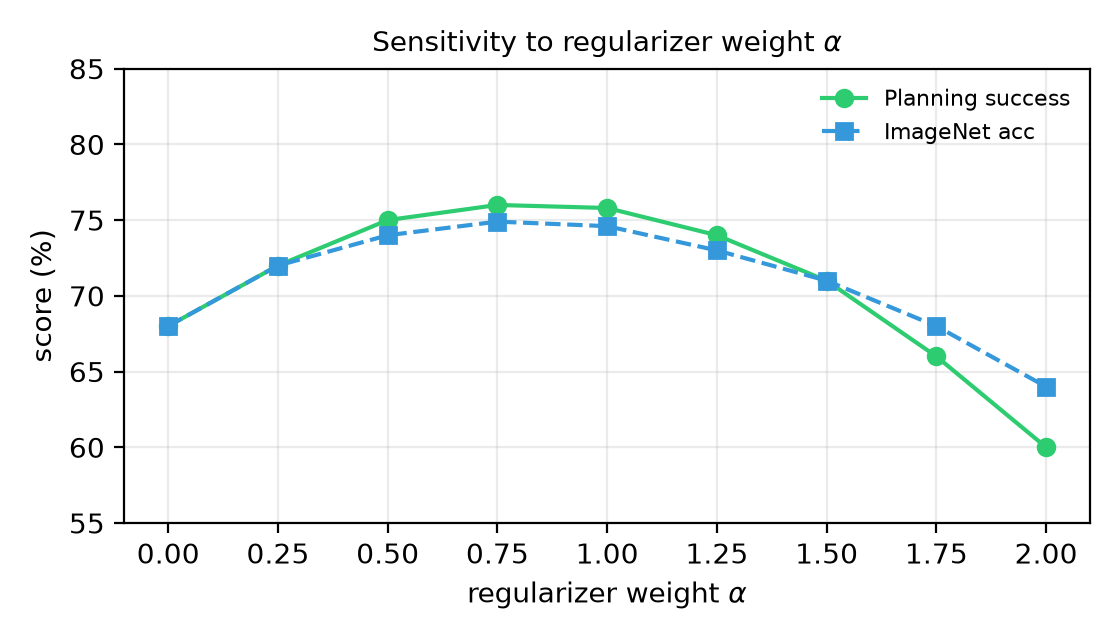}
  \caption{\textbf{Sensitivity.} Planning success and ImageNet accuracy vs.\ regularizer weight $\alpha$. A moderate $\alpha$ is Pareto-optimal.}
  \label{fig:sens}
\end{figure}

\section{Prompt-free World Modeling}
\label{app:prompt}
UniJEPA is fully self-supervised and requires no task prompts. This contrasts with language-conditioned action models \citep{brohan2023rt2,chen2024vla} that require instruction-following data; our model learns from observations and actions alone, which is valuable when annotated instructions are scarce.

\end{document}